\DeclareMathOperator*{\argmin}{arg\,min}
\newtheorem{theorem}{Theorem}[section]
\title{DeltaBound Attack: Efficient decision-based attack in low queries regime}
\author{
  Lorenzo Rossi \\
  Politecnico di Milano\\
  \texttt{lorenzo17.rossi@polimi.mail.it} \\
}
\begin{document}
\maketitle
\begin{abstract}
  Deep neural networks and other machine learning systems, despite being extremely powerful and able to make predictions with high accuracy, are vulnerable to adversarial attacks. We proposed the \textsc{DeltaBound} attack: a novel, powerful attack in the hard-label setting with \(\ell_2\) norm bounded perturbations. In this scenario, the attacker has only access to the top-1 predicted label of the model and can be therefore applied to real-world settings such as remote API. This is a complex problem since the attacker has very little information about the model. Consequently, most of the other techniques present in the literature require a massive amount of queries for attacking a single example. Oppositely, this work mainly focuses on the evaluation of attack's power in the low queries regime (\(\leq 1000\) queries) with \(\ell_2\) norm in the hard-label settings. We find that the \textsc{DeltaBound} attack performs as well and sometimes better than current state-of-the-art attacks while remaining competitive across different kinds of models. Moreover, we evaluate our method against not only deep neural networks, but also non-deep learning models, such as Gradient Boosting Decision Trees and Multinomial Naïve Bayes.
  
\end{abstract}

\section{Introduction}
Machine learning systems are not robust against adversarial examples, meticulously crafted perturbation of the inputs that fool the classifier \citep{biggio2013poisoning, goodfellow2015explaining, narodytska2016simple, sooksatra2022evaluation, strisciuglio2020enhanced}. As Deep Neural Networks and other machine learning models become more and more important in everyday life and safety-critical applications such as autonomous navigation, surveillance systems, medical diagnosis, and malware analysis, it is fundamental to understand their robustness in the worst-case scenario. Evaluating the robustness of models is extremely challenging, so it is critical to develop stronger adversarial attacks to better understand the limits and capabilities of machine learning models. Many adversarial attacks have been proposed, however, many of them focus on the white-box setting, where the attacker has full knowledge of the victim model. However, in the real world scenario, generally, the attacker knows neither the kind of models used nor the weights and parameters of the models. To overcome the limitations of the white-box setting various black-box settings were defined. In the black-box settings, generally, the attacker has limited information about the model and the defender architecture, for example, it might not know the weights of the model or it has no access to the gradient of the model. In addition, adversarial attacks can be targeted (in this case the goal is to cause the model to classify the samples to a chosen class) or untargeted (in this case the goal is to cause the model to classify the samples to a different class from the initial ones).

This work presents a black-box hard-label attack with \(\ell_2\) norm bounded perturbations. Particularly, it focuses on the low queries budget, about 1000 queries.
In summary, the contributions of our work are:
\begin{itemize}
    \item We propose \textsc{DeltaBound} a new attack that achieves results comparable or better than the other state-of-the-art attacks in the low query regime.
    \item We integrate and propose various techniques to improve queries efficiency.
    \item We conduct extensive experiments and ablation studies on our method.
\end{itemize}
 
\section{Related works}
We will first introduce different types of threat models and then we analyze the most related attacks in the literature.

\textbf{White-box attack.} In this setting, the attacker has full knowledge about the victim model, in particular on the learned weights, parameters and training data of the model. Many gradient-based algorithms can be found in the literature such as PGD \citep{madry2019deep}, which uses projected gradient descent and  \textsc{Carlini \& Wagner Attack}\citep{carlini2017evaluating}, which performs gradient descent on a well-designed custom function.

\textbf{Black-box attack.} In the black-box setting, the attacker has no direct knowledge of the model and it only knows very limited information. There exist various black-box settings such as transfer-based attack \citep{liu2017delving, papernot2017practical, han2022enhancing}, where the attack has generally access to some information about the structure of the model or the training data, and soft-label \citep{aldujaili2019bit, andriushchenko2020square}, where the attack has access to the class distribution output of the model. This work focuses on an even stricter setting, called hard-label \citep{cheng2020signopt}, where only the final decision is available, i.e. the top-1 predicted class. The first work to analyze the hard-label setting was the \textsc{Boundary Attack}\citep{brendel2018decisionbased}, where they perform rejection sampling along the decision boundary. Other works were proposed exploiting different proprieties of the decision boundary \citep{cheng2018queryefficient, chen2020hopskipjumpattack, rahmati2020geoda}. The vast majority of the methods tries to find better way to estimated the local hyperplane of the decision boundary, and only a few of them tried different approaches \citep{alzantot2019genattack, maho2020surfree}.

\citet{cheng2018queryefficient} proposed \textsc{Opt-attack} based on a novel reformulation of the classical hard-label black-box attack into an optimization problem that finds a direction that minimizes the distance between the sample and the decision boundary.
For a given example \(x_0\), a correct label \(y_0\) and a black-box function \(f \colon {\rm I\!R}^{d} \mapsto \{1,...,K \} \).

\noindent \textbf{Untargeted attack:}
\begin{equation}
 g(\theta) = \argmin_{\lambda > 0} f(x_{0} + \lambda \frac{\theta}{\|\theta\|}) \neq y_{0}
\end{equation}
\textbf{Targeted attack (given target \textit{t}):}
\begin{equation}
 g(\theta) = \argmin_{\lambda > 0} f(x_{0} + \lambda \frac{\theta}{\|\theta\|}) = t
\end{equation}
\begin{equation}
\min_{\theta \in {\rm I\!R}^{d}} g(\theta)
\end{equation}
\(\lambda\) is the distance from \(x_0\) to the decision boundary given the direction \(\theta\). Therefore, the resulting adversarial example is \(x^*_0=x_0+g(\theta)\frac{\theta}{\|\theta\|}\).
\textsc{Opt-attack} solves the optimization problem by using a zeroth-order approximation optimizer. Moreover, they decided to evaluate \(g(\theta)\) using a binary search like algorithm.

An improvement of \textsc{Opt-attack} was proposed by \citet{cheng2020signopt} called \textsc{Sign-opt attack}. They applied a single query oracle to compute \(sign(g(\theta_1)-g(\theta_0))\).
\begin{equation}
  sign(g(\theta_1)-g(\theta_0)) = 
  \begin{cases}
    +1, & \text{If \(f(x_0+g(\theta_0)\frac{\theta_1}{\|\theta_1\|}) = y_0\),} \\
    -1, & \text{Otherwise.}
  \end{cases}
\end{equation}
This single query oracle allows estimating the gradient of \(g(\theta)\) more efficiently.

Another important black-box hard-label attack is \textsc{Surfree} proposed by \citet{maho2020surfree}. This work exploits the geometrical proprieties of the Convolution Neural Networks. In particular, two of the main contributions are the sampling method based on DCT (Discrete Cosine Transformation), which allows making the perturbation less perceptible, and a geometrical mechanism based on the assumption of a hyperplane decision boundary, which is a common assumption in large Deep Neural Networks.

\section{Problem statement}
We introduce the following notations. We consider \(x_0\) the initial sample, \(y_0\) the ground truth label, \(K\) the number of classes, \(d\) is the number of dimensions in the input space and \(f \colon {\rm I\!R}^d \mapsto \{1,...,K \}\) the black-box classifier. Moreover, we consider \(\|x\|\) as the \(\ell_2\) norm of the vector \(x\).
The attacker does not know the function \(f(x)\), but it can observe the function \(f(x)\) a limited number of times, depending on the budget (maximum number of queries). Our goal (in the untargeted case) is to find an adversarial example in \(\ell_2\) norm that it is define as
\begin{equation}
x_0^* = \argmin_{x^* \in {\rm I\!R}^{d}} \|x_0-x^*\| \qquad s.t. \quad f(x^*) \neq y_0.
\end{equation}

In this work, we use the same optimization formulation proposed in \citet{cheng2018queryefficient} (\ref{eq:adv_def_cheng}). In particular, we consider the hard-label black-box attack setting as an optimization problem that finds a direction that minimizes the distance between the sample and the decision boundary, given a limited budget (a maximum number of queries).
Thus, in (\ref{eq:adv_def_cheng}) we optimize \(\theta\), which is the current perturbation direction.

\begin{equation}
\label{eq:adv_def_cheng}
\min_{\theta \in {\rm I\!R}^{d}} g(\theta) \quad g(\theta) = \argmin_{\lambda > 0} f(x_{0} + \lambda \frac{\theta}{\|\theta\|}) \neq y_0
\end{equation}
The resulting adversarial example from the initial problem formulation is \(x_0^*=x_0+g(\theta)\frac{\theta}{\|\theta\|}\).

\section{Proposed method}
Our method uses an iterative local random search, at every iteration, it tries to slightly modify the current best direction \(\theta_{t-1}\) and checks whether the function \(g\) is smaller by at least a factor \(\delta_{t}\) compared with the current best direction (therefore \(\delta\) is the minimum factor of improvement). The main idea behind this approach is that an exact evaluation of \(g(\theta)\) is very expensive (it generally requires between 4 and 8 queries to obtain a good approximation), while a comparison, which means asking whether \(g(\theta)\) smaller or greater than a specific value, can be done with just one query.

In particular, the method works in the following way, at every iteration, as long as it has not exceeded the maximum number of queries available, the attacker tries to add a small perturbation \(u\) to the current best direction \(\theta_{t-1}\). We call \(v\) the new candidate direction, that is generated by first summing \(\theta_{t-1}\) and \(u\), a randomly generated vector. Then, a Gram-Schmidt procedure is applied to make it orthogonal with the previous directions \(\{\theta_k\}^{t-2}_{k=0}\).
We accept a candidate vector \(v\) only if \(g(v) \leq g(\theta_{t-1}) \delta_{t}\). Therefore, \(\delta_{t}\) can be defined as the factor of improvement, which means that we accept a new vector, only if it is better by at least a factor \(\delta_{t}\). 
In order to check whether \(g(v) \leq g(\theta_{t-1}) \delta_{t}\), we used a variation of the single query oracle proposed in \citet{cheng2020signopt}. Particularly, \(sign(g(v)-g(\theta_{t-1}) \delta_{t}) = -1\) is equivalent to \(g(v) \leq g(\theta_{t-1}) \delta_{t}\). Thus, we update the direction \(\theta_{t-1}\) only if 
\begin{equation}
    f(x_0+g(\theta_{t-1}) \delta_t \frac{v}{\|v\|}) \neq y_0.
\end{equation}
Algorithm \ref{algo:iter} shows one iteration of the \textsc{DeltaBound} attack. The first direction \(\theta_0\) is randomly generated, and therefore the the first \(g_{best}\) is equal to \(g(\theta_0)\). Additionally, a 2D example of the algorithm can be seen in \ref{appendix:D}.

\begin{algorithm}[H]
\label{algo:iter}
\DontPrintSemicolon
\KwInput{best distance \(g_{best}\), best direction \(\theta_{t-1}\), current iteration \(t\)}
\KwOutput{\(\theta_{t}\)}

\(u \gets\) Sample a random vector \;
\(v \gets  \textbf{proj}_{\textbf{span}(\{\theta_k\}^{t-2}_{k=0})} \perp (\theta_{t-1} + u)\) \tcp{Gram-Schmidt orthogonalization}\;

\If(\tcp*[h]{equivalent to \(g(v) \leq g_{best} \delta_{t}\)}){\(f(x_0+g_{best} \delta_t \frac{v}{\|v\|}) \neq y_0\)}{ 
    \(\theta_{t} \gets v\) \;
    \(g_{best} \gets g(\theta_{t})\) \;
}\Else{
    \(\theta_{t} \gets \theta_{t-1}\) \;
}
\KwRet \(\theta_t\)
\caption{One iteration of the \textsc{DeltaBound} Attack.}
\end{algorithm}

\subsection{How to evaluate \(g(\theta_{t})\)}
We evaluate \(g(\theta_{t})\) by using a binary search like algorithm, however compared to previous works \citep{cheng2018queryefficient, cheng2020signopt} in our attack we have an upper bound on \(g(\theta_{t})\), which is \(g(\theta_t) \leq g(\theta_{t-1}) \delta_{t}\), therefore we do not need a fine-grained search and hence reducing the number of queries required to evaluate \(g(\theta)\).

Moreover, the number of queries for a single function evaluation is \(\mathcal{O}(\log{{\epsilon}^{-1} g(\theta_{t-1}))}\), where \(\epsilon\) is the wanted precision. 
If we set the precision to \((1-\delta_{t}) g(\theta_{t-1})\), we not only have the guarantees to obtain a smaller value at the end of the binary search, but also we achieve \(\mathcal{O}(\log{(1-\delta_{t})^{-1}))}\) queries per function evaluation.

It is possible to further reduce the number of queries by noticing that \(\frac{g(\theta_{t})}{g(\theta_{t-1})}\) is generally close to 1 (see Appendix \ref{appendix:A}). So we can reduce the number of queries by estimating the distribution of \(\frac{g(\theta_{t})}{g(\theta_{t-1})}\) (see Appendix \ref{appendix:C}). We estimate the distribution of \(\frac{g(\theta_{t})}{g(\theta_{t-1})}\) as an empirical distribution function by keeping a list of previous ratios. In particular, for our method we need \(G(p)\) the inverse cumulative distribution function (ICDF) of the empirical distribution of \(\frac{g(\theta_{t})}{g(\theta_{t-1})}\).

The modified algorithm (see Algorithm \ref{algo:bin_search}) is quite similar to the classical binary search, but first of all it bisects based on the \(  f(x_0 + g(\theta_{t-1}) G(mid) \frac{\theta_{t}}{\|\theta_{t}\|})  \neq y_0 \), which is equivalent to \(g(\theta_t) \leq g(\theta_{t-1}) G(mid)\) and it does not test the middle point in the distance space, but it finds the point to test by using the ICDF of the empirical distribution function. The result of Algorithm \ref{algo:bin_search} is the value of \(g(\theta_t\).

\begin{algorithm}[H]
\label{algo:bin_search}
\DontPrintSemicolon
\KwInput{\(x_0, y_0\) The original value and label, \(\theta_{t}\) Perturbation, \(\epsilon\) Precision}
\KwData{\(G(p)\) The estimated ICDF of \(\frac{g(\theta_{t})}{g(\theta_{t-1})}\), \(f(x)\) Classifier}

\(l \gets 0\)\;
\(r \gets 1\)\;
\While{\(G(r) - G(l)  > 1 - \epsilon\)}{
    \(mid \gets \frac{l+r}{2}\) \;
    \If{\(  f(x_0 + g(\theta_{t-1}) G(mid) \frac{\theta_{t}}{\|\theta_{t}\|})  \neq y_0 \)}{  
        \(l \gets mid\)\;
    }\Else{
        \(r \gets mid\)\;
    }
}
\KwRet \(g(\theta_{t-1})G(r)\)
\caption{Improved Binary Search.}
\end{algorithm}

\subsection{How to choose \(u\)}
In this section, we discuss how to sample and scale \(u\) the randomly generated vector used in Algorithm \ref{algo:iter}.
To simply our analysis, we consider \(u\) in the form of \(u := z p(t)\), where \(z\) is a vector sampled from some distribution and \(p(t)\) is a function \(p \colon {\rm I\!N} \mapsto {\rm I\!R^+} \), which depends on the current iteration \(t\).

\subsubsection{Sampling distribution of \textit{z}}
\(z\) is the nonscaled random vector of a candidate modification in the perturbation direction \(\theta\). The sampling strategy for \(z\) is crucial for the efficiency of the attack, even more, if the number of queries is two orders of magnitude smaller than the image space, such as in the \textsc{ImageNet} case where the image space has \(3 \times 224 \times 224=150528\) dimensions compared to 1000 queries available.

As suggested by recent works\citep{maho2020surfree, andriushchenko2020square, li2020qeba}, the normal distribution is not the most suited kind of perturbation in the image settings for attacking current computer vision models, such as ResNet \citep{he2015deep}. We evaluate two kinds of techniques, the first one is sampling \(z\) from a normal distribution and the second one exploits some image proprieties to make the attack less perceptible. The latter is a method similar to the one proposed by \citet{maho2020surfree} and it is a random process in which samples \(z\) from a particular distribution according to Algorithm \label{algo:dct}. Firstly, it creates a 0/1 filtering mask and only keeping the \(\rho\)\% of the lowest frequency subband (in order to create a low pass filter), then it is multiplied by a random vector \(r\) where each element is sampled uniformly over \(\{-1, 0, 1\}\). Finally, the resulting vector is converted back to pixels space using an Inverse Discrete Cosine Transformation (\(DCT^{-1}\)). Figure \ref{fig:band} shows some examples of perturbation directions for different values of \(\rho\).

\begin{algorithm}[H]
\DontPrintSemicolon
\(U \gets\) 0/1 filtering mask of the \(\rho \%\) of the lowest frequency subband\;
\(d \gets \mathcal{U}_{\{-1, 0, 1\}} \) \tcp{Same shape of U and independent for each dimension} \;
\(u \gets DCT^{-1}(d \cdot U)\) \;
\KwRet \(u\)
\caption{DCT sampling strategy.}
\label{algo:dct}
\end{algorithm}

\begin{figure}[h]
    \centering
    \includegraphics[width=0.25\textwidth]{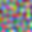}
    \includegraphics[width=0.25\textwidth]{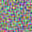}
    \includegraphics[width=0.25\textwidth]{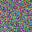}
    \caption{Various perturbation directions using DCT sampling strategy and different low pass filtering rate. From left to right: \(\rho=\{0.1, 0. 5, 0.9\}.\)}
    \label{fig:band}

\end{figure}

\subsubsection{How to choose \(p(t)\)}
The scale factor \(p(t)\) is also an important parameter. This parameter can be seen as a trade-off between exploitation (local search with smaller changes of the current direction \(\theta_t\)) and exploration (searching with larger changes of the current direction \(\theta_t\)).

\begin{theorem}[Probability of updates]
Let \(\Phi(x)\) be the cumulative distribution function of the standard normal distribution.
If \(g(x)\) is differentiable, the vector \(u \sim \mathcal{N}(0, \mathcal{I} p(t)^{2})\) and \(\|\nabla g(\theta_{t})\| \neq 0\), then
\begin{equation}
P[g(\theta_{t}) \leq g(\theta_{t} + u) \delta_{t}] = 1 - \Phi(\frac{g(\theta_{t})(1-\delta_{t}) + o(\|u\|)}{\|\nabla g(\theta_{t})\|p(t)})
\end{equation}
\label{the:prob_up}
\end{theorem}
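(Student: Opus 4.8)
The plan is to reduce the event to a one-dimensional Gaussian tail by linearizing $g$ around $\theta_t$. First, I would use differentiability of $g$ at $\theta_t$ to expand, along the random displacement $u$,
\begin{equation}
g(\theta_t + u) = g(\theta_t) + \langle \nabla g(\theta_t),\, u \rangle + o(\|u\|),
\end{equation}
the $o(\|u\|)$ being the first-order Taylor remainder; this is precisely the regime the statement targets, since $p(t)$ controls $\|u\|$ and is meant to be small in the exploitation phase discussed above.

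Second, I would rewrite the update event in terms of this expansion. An update is made exactly when the candidate direction beats the current one by the prescribed factor, i.e.\ when $g(\theta_t + u) \le \delta_t\, g(\theta_t)$ (the acceptance test executed in Algorithm~\ref{algo:iter}, read with current best $\theta_t$, so $g_{best} = g(\theta_t)$). Substituting the expansion and rearranging — $g(\theta_t)$ and $\delta_t$ are deterministic with respect to the randomness in $u$ — this event is equivalent to
\begin{equation}
\langle \nabla g(\theta_t),\, u \rangle \;\le\; -\,g(\theta_t)(1-\delta_t) - o(\|u\|).
\end{equation}

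Third, I would compute the probability of this half-space. Since $u \sim \mathcal{N}(0, \mathcal{I}\, p(t)^2)$, the scalar $\langle \nabla g(\theta_t), u\rangle$ is a centered Gaussian with variance $\|\nabla g(\theta_t)\|^2 p(t)^2$; here the hypothesis $\|\nabla g(\theta_t)\| \neq 0$ is exactly what guarantees non-degeneracy and a well-defined denominator. Standardizing and using $\Phi(-z) = 1 - \Phi(z)$ then yields
\begin{equation}
P\bigl[\langle \nabla g(\theta_t), u\rangle \le -g(\theta_t)(1-\delta_t) - o(\|u\|)\bigr] = 1 - \Phi\!\left(\frac{g(\theta_t)(1-\delta_t) + o(\|u\|)}{\|\nabla g(\theta_t)\|\, p(t)}\right),
\end{equation}
which is the claimed identity.

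Fourth, the step I expect to be the real obstacle is the honest treatment of the $o(\|u\|)$ remainder: it is not a constant but a function of the random $u$, so carrying it inside $\Phi$ as above is, strictly speaking, a heuristic. To make it rigorous I would condition on the direction $u/\|u\|$, use that $\|u\|$ concentrates around $p(t)\sqrt{d}$ to bound the remainder uniformly on the relevant event, and show that the error it induces in the probability is itself lower order — equivalently, state the conclusion as an asymptotic identity as $p(t)\to 0$. Everything else — the Taylor expansion and the fact that a linear image of a Gaussian is again Gaussian — is routine.
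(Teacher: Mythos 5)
Your proposal follows essentially the same route as the paper's own proof: a first-order Taylor expansion of \(g\) at \(\theta_t\), rewriting the acceptance event as a half-space condition on \(\nabla g(\theta_t)^{T} u\), and evaluating the resulting Gaussian probability by standardization and \(\Phi(-z)=1-\Phi(z)\). Your fourth step is in fact more careful than the paper, which carries the random \(o(\|u\|)\) remainder inside \(\Phi\) as if it were deterministic; note also that you read the event as the algorithm's acceptance test \(g(\theta_t+u)\le \delta_t g(\theta_t)\), which is what the paper's proof effectively computes even though the theorem statement writes the inequality the other way.
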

\begin{proof}
See Appendix \ref{theo:A.1}
\end{proof}

By Theorem \ref{the:prob_up}, in order to keep the probability of finding a better value for \(g(\theta_{t})\) greater than a constant, we must ensure that \(\frac{g(\theta_{t})(1-\delta_{t}) + o(\|u\|)}{\|\nabla g(\theta_{t})\|p(t)} \leq C \) for a constant \(C \in {\rm I\!R}\). Therefore, we have to choose \(\delta_t\) and \(p(t)\) (\(\|u\|\) is strongly correlated with \(p(t)\)) such that \(\frac{g(\theta_{t})(1-\delta_{t}) + o(\|u\|)}{\|\nabla g(\theta_{t})\|p(t)}\) has an upper bound. First of all, we know that if the algorithm converges to a local minimum, then \(\|\nabla g(\theta_{t})\|\to 0\). Thus, both \(p(t) \to 0 \) and \(\delta_{t} \to 1\) must hold, as \(t \to \infty\), in order to guarantee the probability of updates lower bounded.
In practice, when we analyze the problem in the low queries regime it is not necessary to have \( p(t) \to 0 \) (see Appendix \ref{appendix:B}).

We tried various schedule functions for \(p(t)\).
\begin{itemize}
  \item[]  \textbf{const}  \( p(t) = 1 \)
  \item[]  \textbf{linear} \( p(t) = \frac{1}{t + 1} \)
  \item[]  \textbf{sqrt}   \( p(t) =  \frac{1}{\sqrt{t + 1}} \)
  \item[]  \textbf{log}    \( p(t) = \frac{1}{\log{t + 2}} \)
\end{itemize}

\subsection{How to choose \(\delta_{t}\)}
First of all, we must choose \(\delta_t\) such that \(0 < \delta_{t} < 1 \), as \(\delta_t\) represents the factor of improvement required.
Moreover, if the algorithm finds a local minimum then \(\|\nabla g(\theta_{t})\|\to 0\). Therefore, as describe in subsection 4.2.2, we must design \(\delta_{t}\) such that \(\delta_{t} \to 1\).

\begin{theorem}[Gradient Bound] 
Let \(g(x)\) be differentiable, \(\delta_{t} = 1 - \|\nabla g(\theta_{t})\| \) and \(g(\theta_{t+1}) \leq \delta_{t} g(\theta_{t})\).
\begin{equation}
  \|\nabla g(\theta_{t})\| \leq o(\frac{1}{\cos{\angle \big(\nabla g(\theta_{t}), \theta_{t+1}-\theta_{t} \big)}} + \frac{\|\theta_{t+1}-\theta_{t}\|}{g(\theta_{t})})
\end{equation}
\label{the:grad}
\end{theorem}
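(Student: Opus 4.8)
The plan is to linearise $g$ around $\theta_t$ along the accepted increment $h:=\theta_{t+1}-\theta_t$ and push the acceptance condition $g(\theta_{t+1})\le\delta_t g(\theta_t)$ through that expansion. Since $g$ is differentiable at $\theta_t$, differentiability alone supplies the first-order expansion
\begin{equation}
g(\theta_{t+1}) = g(\theta_t) + \langle\nabla g(\theta_t),h\rangle + o(\|h\|) = g(\theta_t) + \|\nabla g(\theta_t)\|\,\|h\|\cos\angle\big(\nabla g(\theta_t),h\big) + o(\|h\|)
\end{equation}
as $\|h\|\to0$; the second equality merely rewrites the inner product through the cosine of the angle between $\nabla g(\theta_t)$ and $h$.

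Next I would substitute $\delta_t=1-\|\nabla g(\theta_t)\|$, turning $g(\theta_{t+1})\le\delta_t g(\theta_t)$ into $g(\theta_t)+\langle\nabla g(\theta_t),h\rangle+o(\|h\|)\le g(\theta_t)-\|\nabla g(\theta_t)\|g(\theta_t)$. Cancelling $g(\theta_t)$ and collecting the two terms that carry $\|\nabla g(\theta_t)\|$ gives
\begin{equation}
\|\nabla g(\theta_t)\|\Big(g(\theta_t)+\|h\|\cos\angle\big(\nabla g(\theta_t),h\big)\Big) \le -o(\|h\|).
\end{equation}
Because $g(\theta_t)>0$ and $|\cos\angle|\le1$, the bracket tends to $g(\theta_t)$ and is strictly positive for $\|h\|$ small, so dividing by it and writing $-o(\|h\|)=\|h\|\,o(1)$ yields
\begin{equation}
\|\nabla g(\theta_t)\| \;\le\; o(1)\,\frac{\|h\|}{\,g(\theta_t)+\|h\|\cos\angle\big(\nabla g(\theta_t),h\big)\,} \;=\; o(1)\,\frac{1}{\,g(\theta_t)/\|h\|+\cos\angle\big(\nabla g(\theta_t),h\big)\,}.
\end{equation}

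The remaining step only reshapes the right-hand side: using $\cos\angle\ge-1$ the denominator is at least $g(\theta_t)/\|h\|-1$, so the fraction is $O(\|h\|/g(\theta_t))$ and in particular is at most $1/|\cos\angle(\nabla g(\theta_t),h)|+\|\theta_{t+1}-\theta_t\|/g(\theta_t)$ once $\|h\|$ is small; absorbing the $o(1)$ prefactor, which is precisely the Taylor remainder, gives $\|\nabla g(\theta_t)\|\le o\big(1/\cos\angle(\nabla g(\theta_t),h)+\|\theta_{t+1}-\theta_t\|/g(\theta_t)\big)$, i.e.\ the stated bound (with $1/\cos\angle$ understood via $|\cos\angle|$). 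The main thing to be careful about is exactly this last normalisation together with the $o(\cdot)$ and sign bookkeeping: an accepted step strictly decreases $g$, so $\cos\angle(\nabla g(\theta_t),h)$ is in fact non-positive, which is why dividing above is licit only while $g(\theta_t)+\|h\|\cos\angle$ stays positive and why the statement must be read with $|\cos\angle|$; one also has to check that the remainder $o(\|h\|)$ is transported faithfully through the cancellation of $g(\theta_t)$ and the division -- this part is routine.
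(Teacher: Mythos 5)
Your proposal follows essentially the same route as the paper's own proof: the first-order Taylor expansion written through \(\cos\angle\big(\nabla g(\theta_t),\theta_{t+1}-\theta_t\big)\), substitution of \(\delta_t = 1-\|\nabla g(\theta_t)\|\) into the acceptance inequality \(g(\theta_{t+1})\le\delta_t g(\theta_t)\), cancellation of \(g(\theta_t)\), and a final rearrangement into the stated \(o(\cdot)\) bound. If anything, you are more explicit than the paper about that last step (the division by \(g(\theta_t)+\|h\|\cos\angle\), the transport of the \(o(\|h\|)\) remainder, and the sign/absolute-value reading of \(1/\cos\angle\)), which the paper compresses into an unjustified "rearrange the terms."
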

\begin{proof}
See Appendix \ref{theo:A.2}
\end{proof}

Moreover, Theorem \ref{the:grad} gives us a bound on the gradient of \(g(\theta_{t})\), however it is still not obvious how to choose \(\delta_{t}\).
Because, \(\|\nabla g(\theta_{t})\|\) is not trivial to estimate, various previous works tried to analyze it \citep{fawzi2016robustness, Fawzi_2018_CVPR}, however, we are not aware of any common proprieties across different types of machine learning models.

In our work, we test several approaches, each of them mainly differs from one another on how fast it converges to 1.
Let \(C\in {\rm I\!R}\) be the factor of \(\delta_t\) and let the functions below be possible schedule rate of \(\delta_t\).
\begin{itemize}
  \item[]  \textbf{linear} \( \delta_{t} = 1 - \frac{C}{t+1} \)
  \item[]  \textbf{sqrt}   \( \delta_{t} = 1 - \frac{C}{\sqrt{t+1}} \)
  \item[]  \textbf{log}    \( \delta_{t} = 1 - \frac{C}{\log{t+2}} \)
\end{itemize}

\section{Experimental work and results}
Firstly, we specify the experimental setup and parameters, then we perform an ablation study on the various components, lastly, we compare our attack to recent state-of-the-art hard-label black-box attacks in \(\ell_2\).
\subsection{Experimental setup}
\textbf{Datasets and models}
We use three image datasets (\textsc{MNIST}, \textsc{CIFAR10}, \textsc{ImageNet}), we choose these because they are the most used datasets for adversarial attack benchmarks and one tabular dataset (Breast Cancer Wisconsin dataset\citep{Dua:2019}), we choose this dataset because it is simple (all tested models achieve high accuracy on the test set) and it is a classification task.

For \textsc{MNIST}, we use a pre-trained multilayer perceptron with 2 fully connected layers with 256 neurons with RELU  \footnote{Model weights: \url{http://ml.cs.tsinghua.edu.cn/~chenxi/pytorch-models/mnist-b07bb66b.pth}}. We randomly choose a subset of 150 correctly classified images for the evaluation.

\textsc{CIFAR10} dataset is tackled by using WideResNet-28-10 architectures both for standard and robust ResNet model for Robust model. We employ the pre-trained model available in the RobustBench library\citep{croce2020robustbench}. For the Robust model, we use \citet{rebuffi2021fixing}, because it shows high robust accuracy performance. We randomly select a subset of 100 correctly classified images for the evaluation.

For \textsc{ImageNet}, we use a pre-trained ResNet18, made available for the PyTorch environment. We randomly selected 100 correctly classified images with size \(3 \times 224 \times 224\).

The Breast Cancer Wisconsin dataset is a classical binary classification dataset. It has a total of 569 samples, each sample has 30 dimensions. We evaluate different kinds of models from the sklearn library\citep{scikit-learn}. In particular, we use the following classes \textsc{GradientBoostingClassifier}, \textsc{LogisticRegression}, \textsc{RandomForestClassifier}, \textsc{MultinomialNB}, \textsc{AdaBoostClassifier}, \textsc{DecisionTreeClassifier}, in order to give a wide range of different kind of models. 
We randomly sample 86 data points from the dataset.

\textbf{Setup and Code}
Unless differently stated we use the following parameters \(p(t)=1\), \(\delta_t=\frac{0.05}{\log{t+2}}\). For the sampling strategy, we use DCT with \(\rho=0.1\) in the \textsc{MNIST}, Robust \textsc{CIFAR10} and \textsc{ImageNet} cases, while \(\rho=0.9\) for the Standard \textsc{CIFAR10} case. In the Breast Cancer Wisconsin dataset it is not possible to use the DCT sampling strategy, therefore, we used the normal sampling strategy. We develop \textsc{Deltabound} on top of the FoolBox library \citep{rauber2017foolboxnative}.

\subsection{Evaluation metrics}
The core evaluation metrics is the average \(\ell^2\) distance between the initial image \(x_0\) and the adversarial example \(x^*_0\) (equation 10). For the image datasets, we give a budget of 1000 queries, while we give a budget of 500 queries for the Breast Cancer Wisconsin dataset due to its smaller input space.
\begin{equation}
 \textit{avg}_{\ell^2}  := \frac{1}{n} \sum_{i=1}^{n} \|x_0-x^*_0\|_2
\end{equation}

\subsection{Ablation study}
In this subsection, we evaluate the various components, and we analyze how important each component is.
We tested the various hyper-parameters on \textsc{CIFAR10} with the standard model and the results of this analysis are in Table \ref{tab:avg_l2}.
Therefore, we show that choosing a good \(p(t)\) and \(\delta_t\) schedule and factor are extremely important, and different choices of those parameters could significantly change the performance and effectiveness of \textsc{DeltaBound}.

\begin{table}
\centering
\begin{tabular}{|c|cc|c||c|} 
 \hline
 \(p(t)\) & \(\delta_t\) schedule & \(\delta_t\) factor & \(u\) distribution & average \(\ell_2\) \\
 \hline
 \hline
 linear & log   & 0.01  & normal & 2.04 \\
 sqrt   & log   & 0.01  & normal & 1.06 \\
 log    & log   & 0.01  & normal & 0.87 \\
 const  & log   & 0.01  & normal & 0.79 \\
 \hline
 const & linear   & 0.01 & normal & 1.09 \\
 const & sqrt     & 0.01 & normal & 0.87 \\
 \hline
 const & log      & 0.01 & DCT    & 0.79 \\
 const & log      & 0.05 & DCT    & 0.72 \\
 const & log      & 0.1  & DCT    & 0.88 \\
 \hline
 \end{tabular}
 \caption{Average \(\ell_2\) using the standard model on \textsc{CIFAR10} dataset with a budget of 1000 queries.}
 \label{tab:avg_l2}
\end{table}
\begin{figure}[hhhh]
    \centering
    \begin{subfigure}{0.15\textwidth}
    \includegraphics[width=\textwidth]{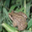}
    \caption{frog}
    \end{subfigure}
    \begin{subfigure}{0.15\textwidth}
    \includegraphics[width=\textwidth]{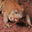}
    \caption{frog}
    \end{subfigure}
    \begin{subfigure}{0.15\textwidth}
    \includegraphics[width=\textwidth]{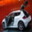}
    \caption{automobile}
    \end{subfigure}
    \begin{subfigure}{0.15\textwidth}
    \includegraphics[width=\textwidth]{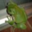}
    \caption{frog}
    \end{subfigure}
    
    \begin{subfigure}{0.15\textwidth}
    \includegraphics[width=\textwidth]{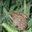}
    \caption{bird}
    \end{subfigure}
    \begin{subfigure}{0.15\textwidth}
    \includegraphics[width=\textwidth]{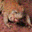}
    \caption{bird}
    \end{subfigure}
    \begin{subfigure}{0.15\textwidth}
    \includegraphics[width=\textwidth]{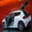}
    \caption{dog}
    \end{subfigure}
    \begin{subfigure}{0.15\textwidth}
    \includegraphics[width=\textwidth]{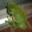}
    \caption{dog}
    \end{subfigure}
    
    \caption{Initial image (top row) and Adversarial example (bottom row) on \textsc{CIFAR10} with the standard model.}
    \label{fig:samples}
\end{figure}

\subsection{Benchmark}
\textbf{Image datasets} In our benchmark, we compare the performance of our method with several state-of-the-art attacks: \textsc{HSJA}\citep{chen2020hopskipjumpattack}, \textsc{Boundary Attack}\citep{brendel2018decisionbased} and \textsc{SurFree} \citep{maho2020surfree}. Moreover, despite not being part of our setting, we also compare with \textsc{Carlini \& Wagner Attack}\citep{carlini2017evaluating}, which is a white-box attack.
For \textsc{HSJA}\citep{chen2020hopskipjumpattack}, \textsc{Boundary Attack}\citep{brendel2018decisionbased} and \textsc{Carlini \& Wagner Attack}\citep{carlini2017evaluating} we used the implementation directly available in the FoolBox library \citep{rauber2017foolboxnative}. While for \textsc{SurFree} \citep{maho2020surfree}, we use the official GitHub implementation \footnote{\url{https://github.com/t-maho/SurFree}}.
Table \ref{tab:comparison} shows a comparison of the different methods across various models and datasets. Our work remains competitive across different kinds of computer vision models and datasets while being better on \textsc{MNIST} and \textsc{CIFAR10} with a robust model.
Figure \ref{fig:samples} displays some attacked images with the \textsc{DeltaBound} attack on \textsc{CIFAR10} with the standard model.

\begin{table} 
\centering
\begin{tabular}{|c||c|c|c|c|} 
 \hline
 Attack &  \textsc{MNIST} & Standard - \textsc{CIFAR10} & Robust - \textsc{CIFAR10} & \textsc{ImageNet}\\ [0.5ex]
 \hline
 \hline
 Our method & \textbf{1.19} & \underline{0.72}  & \textbf{2.84} & \underline{8.15} \\
 \hline
 \textsc{HSJA} & \underline{1.30} & \underline{0.72} & 3.77 & 10.70 \\
 \hline
 \textsc{BoundaryAttack} & 4.95 & 4.55 & 8.97 & 34.22 \\
 \hline
 \textsc{SurFree} & 1.63 & \textbf{0.68} & \underline{3.71} & \textbf{6.73} \\
 \hline
 \hline
 \textsc{C\&W} (white-box) & 0.95 & 0.11 & 0.88 & 0.45 \\
 \hline
 \end{tabular}
\caption{Average \(\ell_2\) with a budget of 1000 queries. \textbf{bold} the best. \underline{underlined} the second best.}
 \label{tab:comparison}
\end{table}

\textbf{Tabular dataset} In Table \ref{tab:tabular}, we show our method compared with \textsc{HSJA}\citep{chen2020hopskipjumpattack} on the breast cancer Wisconsin dataset\citep{Dua:2019}. Our method is consistently better than \textsc{HSJA}, across a wide and diverse range of machine learning methods. We exclude \textsc{Boundary Attack}\citep{brendel2018decisionbased} because it has been proven to be less efficient than \textsc{HSJA}\citep{chen2020hopskipjumpattack}. We also do not consider \textsc{Carlini \& Wagner Attack}\citep{carlini2017evaluating} as it is a white-box attack and it requires access to the model gradient, which many models we analyze do not have. Moreover, we cannot evaluate \textsc{SurFree} \citep{maho2020surfree} as their work focuses on the images settings and particularly their sampling strategy it is not trivially adaptable to the tabular settings.

\begin{table} 
\centering
 \begin{tabular}{|c||c|c|c|c|}
 \hline
 Model & const - log 0.1 (ours) & const - sqrt 0.1 (ours) & HSJA \\ [0.5ex]
 \hline
 \hline
 Gradient Boosting Tree & 0.58 & \textbf{0.51} & 0.67 \\
 \hline
 Logistic Regression & \textbf{5.86} & 5.88 & 6.15 \\
 \hline
 Random Forest & \textbf{0.19} & 0.28 & 0.62 \\
 \hline
 Multinomial Naive Bayes & \textbf{16.45} & 16.49 & 17.22\\
 \hline
 AdaBoost & \textbf{2.41} & 2.54 & 2.79\\ 
 \hline
 Decision Tree & \textbf{0.52} & \textbf{0.52} & 0.92\\
 \hline
 \hline
\end{tabular}
 \caption{Average \(\ell_2\) with a budget of 500 queries using the breast cancer Wisconsin dataset\citep{Dua:2019}. const - log 0.1 means that \(p(t)=1\) and \(\delta_t=1-\frac{0.1}{\log{t+2}}\), while for const - sqrt o.1 means that \(p(t)=1\) and \(\delta_t=1-\frac{0.1}{\sqrt{t+2}}\).\textbf{bold} the best.}
 \label{tab:tabular}
\end{table}

\section{Conclusions}
We propose the \textsc{DeltaBound} attack, an adversarial attack in the black-box hard-label setting with a very limited amount of queries (\(\leq 1000\)). Overall, our attack achieves better results than the other considered attacks from literature in at least one of its configurations.

\section*{Acknowledgement}
The author would like to thank Giacomo Boracchi (Politecnico di Milano) and Loris Giulivi (Politecnico di Milano) for the helpful suggestions.

\medskip
\bibliography{main}
\appendix

\section{Theorem proofs}
\label{appendix:A}
\begin{theorem}[Probability of updates]
\label{theo:A.1}
Let \(\Phi(x)\) be the cumulative distribution function of the standard normal distribution.
If \(g(x)\) is differentiable, the vector \(u \sim \mathcal{N}(0, \mathcal{I} p(t)^{2})\) and \(\|\nabla g(\theta_{t})\| \neq 0\), then
\begin{equation}
P[g(\theta_{t}) \leq g(\theta_{t} + u) \delta_{t}] = 1 - \Phi(\frac{g(\theta_{t})(1-\delta_{t}) + o(\|u\|)}{\|\nabla g(\theta_{t})\|p(t)})
\end{equation}
\end{theorem}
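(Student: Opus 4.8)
The plan is to reduce the probability to a one-dimensional Gaussian tail computation by linearizing $g$ around $\theta_t$. First I would use differentiability of $g$ to write
\begin{equation}
g(\theta_t + u) = g(\theta_t) + \langle \nabla g(\theta_t),\, u\rangle + o(\|u\|),
\end{equation}
and substitute this into the event $\{\, g(\theta_t) \le \delta_t\, g(\theta_t + u)\,\}$. Collecting terms, this event is equivalent to
\begin{equation}
\langle \nabla g(\theta_t),\, u\rangle \ \ge\ \frac{(1-\delta_t)\, g(\theta_t)}{\delta_t} + o(\|u\|),
\end{equation}
and since $0 < \delta_t < 1$ — and in the regime of interest $\delta_t$ is close to $1$ — the factor $\tfrac{1-\delta_t}{\delta_t}$ equals $1-\delta_t$ up to a term of order $(1-\delta_t)^2$, which I fold into the $o(\cdot)$ bookkeeping, so the threshold becomes $(1-\delta_t)g(\theta_t) + o(\|u\|)$ as in the statement.

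The key step is then the observation that $Y := \langle \nabla g(\theta_t),\, u\rangle$ is a linear image of the Gaussian vector $u \sim \mathcal{N}(0,\, p(t)^2 \mathcal{I})$, hence $Y \sim \mathcal{N}\!\big(0,\ \|\nabla g(\theta_t)\|^2\, p(t)^2\big)$; the hypothesis $\|\nabla g(\theta_t)\| \neq 0$ is exactly what guarantees this is a nondegenerate (positive-variance) Gaussian. Consequently, for any threshold $c$,
\begin{equation}
P[\, Y \ge c\,] = 1 - \Phi\!\left(\frac{c}{\|\nabla g(\theta_t)\|\, p(t)}\right),
\end{equation}
and taking $c = (1-\delta_t)g(\theta_t) + o(\|u\|)$ reproduces the claimed identity.

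The main obstacle — and the only place that needs care — is making sense of the $o(\|u\|)$ term, since $\|u\|$ is random and sits inside $\Phi$. I would make this precise by reading the identity asymptotically in $p(t)\to 0$: writing $u = p(t)\, z$ with $z \sim \mathcal{N}(0,\mathcal{I})$, the Taylor remainder is $o(p(t))$ uniformly on compacts in $z$ (equivalently, with probability tending to one), so the true event differs from the linearized event $\{\, Y \ge (1-\delta_t)g(\theta_t)\,\}$ only on a set of vanishing probability, and the two probabilities coincide up to the displayed $o(\cdot)$ correction. An equivalent and cleaner framing is to state the equality as an exact identity for the first-order model of $g$, with $o(\|u\|)$ recording the error of that replacement; in either reading the substantive content is the Gaussian-tail computation above.
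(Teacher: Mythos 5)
Your proposal is correct and follows essentially the same route as the paper's proof: a first-order Taylor expansion of \(g(\theta_t+u)\), reduction of the event to a threshold condition on the Gaussian variable \(\langle \nabla g(\theta_t), u\rangle \sim \mathcal{N}(0, \|\nabla g(\theta_t)\|^2 p(t)^2)\), and the standard normal tail formula. If anything you are slightly more careful than the paper, which silently drops the \(1/\delta_t\) factor (and flips the inequality, relying on Gaussian symmetry) where you explicitly fold it into the error bookkeeping and also address the interpretation of the random \(o(\|u\|)\) term.
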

\begin{proof}
From first order Taylor expansion,
\[ g(\theta_{t}) \le g(\theta_{t}+u)\delta_{t} = (g(\theta_{t}) + {\nabla g(\theta_{t})}^{T} u + o(\|u\|))\delta_{t} \]
After reordering the terms, we obtain
\[{\nabla g(\theta_{t})}^{T} u \leq g(\theta_{t})(\delta_{t}-1) + o(\|u\|) \]
Since \(u \sim \mathcal{N}(0, \mathcal{I} p(t)^{2})\), we get
\[ {\nabla g(\theta_{t})}^{T} u = \sum_{l=1}^{d} {\nabla g(\theta_{t})}_{l} u_{l} \sim \mathcal{N}(0, \|\nabla g(\theta_{t})\|^{2}p(t)^{2})\]
Thus we will have,
\[ P[g(\theta_{t}) \leq g(\theta_{t} + u) \delta_{t}] = P[{\nabla g(\theta_{t})}^{T} u \leq g(\theta_{t})(\delta_{t}-1) + o(\|u\|)]  = 1 - \Phi(\frac{g(\theta_{t})(1-\delta_{t}) + o(\|u\|)}{\|\nabla g(\theta_{t})\|p(t)}) \]
\end{proof}

\begin{theorem}[Gradient Bound] 
\label{theo:A.2}
Let \(g(x)\) be differentiable, \(\delta_{t} = 1 - \|\nabla g(\theta_{t})\| \) and \(g(\theta_{t+1}) \leq \delta_{t} g(\theta_{t})\)
\begin{equation}
  \|\nabla g(\theta_{t})\| \leq o(\frac{1}{\cos{\angle \big(\nabla g(\theta_{t}), \theta_{t+1}-\theta_{t} \big)}} + \frac{\|\theta_{t+1}-\theta_{t}\|}{ g(\theta_{t})})
\end{equation}
\end{theorem}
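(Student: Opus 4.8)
The plan is to adapt the argument used for Theorem~\ref{theo:A.1}: expand $g$ to first order around $\theta_{t}$ along the step $\theta_{t+1}-\theta_{t}$ that the algorithm actually accepted, feed in the improvement condition $g(\theta_{t+1}) \le \delta_{t}g(\theta_{t})$ with $\delta_{t} = 1-\|\nabla g(\theta_{t})\|$, and then solve the resulting inequality for $\|\nabla g(\theta_{t})\|$.

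First I would write, using differentiability of $g$ at $\theta_{t}$,
\[ g(\theta_{t+1}) = g(\theta_{t}) + {\nabla g(\theta_{t})}^{T}(\theta_{t+1}-\theta_{t}) + o(\|\theta_{t+1}-\theta_{t}\|). \]
Substituting $g(\theta_{t+1}) \le (1-\|\nabla g(\theta_{t})\|)g(\theta_{t})$ and cancelling the common $g(\theta_{t})$ gives
\[ \|\nabla g(\theta_{t})\|\,g(\theta_{t}) \le -{\nabla g(\theta_{t})}^{T}(\theta_{t+1}-\theta_{t}) - o(\|\theta_{t+1}-\theta_{t}\|). \]
Because the left-hand side is nonnegative, the accepted step is, to leading order, a descent direction, so $\cos\angle\big(\nabla g(\theta_{t}),\theta_{t+1}-\theta_{t}\big) \le 0$; this sign fact, together with $g(\theta_{t})>0$ and $\|\nabla g(\theta_{t})\|\neq 0$, is what will make the divisions below well posed.

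Next I would rewrite the directional derivative as ${\nabla g(\theta_{t})}^{T}(\theta_{t+1}-\theta_{t}) = \|\nabla g(\theta_{t})\|\,\|\theta_{t+1}-\theta_{t}\|\cos\angle\big(\nabla g(\theta_{t}),\theta_{t+1}-\theta_{t}\big)$, substitute it in, collect the two terms carrying $\|\nabla g(\theta_{t})\|$ on the left, factor it out, and divide by the coefficient $g(\theta_{t}) + \|\theta_{t+1}-\theta_{t}\|\cos\angle(\cdot)$. This already yields an upper bound, of the form $\|\nabla g(\theta_{t})\| \le -o(\|\theta_{t+1}-\theta_{t}\|)\big/\big(g(\theta_{t}) + \|\theta_{t+1}-\theta_{t}\|\cos\angle(\cdot)\big)$. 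The remaining step is to absorb the bounded constants and the remainder into a single $o(\cdot)$ and then control this quotient by $\frac{1}{\cos\angle(\cdot)} + \frac{\|\theta_{t+1}-\theta_{t}\|}{g(\theta_{t})}$, which is the displayed bound (possibly after replacing $\cos\angle(\cdot)$ by $|\cos\angle(\cdot)|$ or otherwise tightening the statement so that the comparison is genuinely an inequality in the right direction).

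The main obstacle is precisely this final identification and the sign bookkeeping it rests on. Dividing by $g(\theta_{t}) + \|\theta_{t+1}-\theta_{t}\|\cos\angle(\cdot)$ is only legitimate once its sign is known, which needs an assumption that the accepted step does not ``overshoot'' the decision boundary relative to $g(\theta_{t})$; and matching $\frac{1}{g(\theta_{t})+\|\theta_{t+1}-\theta_{t}\|\cos\angle(\cdot)}$ against $\frac{1}{\cos\angle(\cdot)} + \frac{\|\theta_{t+1}-\theta_{t}\|}{g(\theta_{t})}$ really only makes sense asymptotically, in the near-convergence regime where $\|\nabla g(\theta_{t})\|\to 0$, $\|\theta_{t+1}-\theta_{t}\|\to 0$ and $g(\theta_{t})\to 0$ together, so that the Taylor remainder is genuinely of lower order than both displayed terms. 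Spelling out with respect to which limit the $o(\cdot)$ is taken, and checking that this comparison does not silently flip the inequality, is where I expect to spend most of the effort.
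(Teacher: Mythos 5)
Your plan is essentially the paper's own proof: a first-order Taylor expansion of \(g(\theta_{t+1})\) about \(\theta_{t}\), written in the form \(\|\nabla g(\theta_{t})\|\,\|\theta_{t+1}-\theta_{t}\|\cos\angle\big(\nabla g(\theta_{t}),\theta_{t+1}-\theta_{t}\big)\), followed by substitution of \(g(\theta_{t+1})\leq\delta_{t}g(\theta_{t})\) and \(\delta_{t}=1-\|\nabla g(\theta_{t})\|\), cancellation of \(g(\theta_{t})\), and isolation of \(\|\nabla g(\theta_{t})\|\) by dividing through by \(g(\theta_{t})+\|\theta_{t+1}-\theta_{t}\|\cos\angle(\cdot)\). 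The only difference is that you explicitly flag the sign of that divisor and the final identification of the resulting quotient with \(o\bigl(\frac{1}{\cos\angle(\cdot)}+\frac{\|\theta_{t+1}-\theta_{t}\|}{g(\theta_{t})}\bigr)\) as the delicate points, which the paper's proof passes over by simply asserting the rearranged bound.
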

\begin{proof}
Let \[\delta_{t} = 1 - \|\nabla g(\theta_{t})\| \]
From first order Taylor expansion, \[g(\theta_{t+1}) = g(\theta_{t}) + \|\nabla g(\theta_{t})\| \|\theta_{t+1}-\theta_{t}\| \cos{\angle \big(\nabla g(\theta_{t}), \theta_{t+1}-\theta_{t} \big)} + o(\|\theta_{t+1}-\theta_{t}\|)  \]
We can substitute \(g(\theta_{t+1})\) with \(g(\theta_{t+1}) \leq \delta_{t} g(\theta_{t})\), \[g(\theta_{t}) + \|\nabla g(\theta_{t})\| \|\theta_{t+1}-\theta_{t}\| \cos{\angle \big(\nabla g(\theta_{t}), \theta_{t+1}-\theta_{t} \big)} + o(\|\theta_{t+1}-\theta_{t}\|) \leq \delta_{t} g(\theta_{t})\]
We can substitute \(\delta_{t}\) and rearrange the terms and we obtain,
\[ \|\nabla g(\theta_{t})\| \leq o(\frac{1}{\cos{\angle \big(\nabla g(\theta_{t}), \theta_{t+1}-\theta_{t} \big)}} + \frac{\|\theta_{t+1}-\theta_{t}\|}{g(\theta_{t})})\]
\end{proof}

\section{Estimation of \(\frac{g(\theta_{t})}{g(\theta_{t-1})}\)}
\label{appendix:B}
For the figure below, we see that the ratio of \(\frac{g(\theta_{t})}{g(\theta_{t-1})}\) converges toward 1. Moreover, we can better see the trade-off between a smaller value of \(\delta_{t}\) (log 0.01 - const) with many, but small improvements, compared with a bigger \(\delta_{t}\) (log 0.1 - const) with fewer, but larger improvements.

\begin{tikzpicture}
\begin{axis}[
    enlargelimits=false,
    ylabel={\(\frac{g(\theta_{t})}{g(\theta_{t-1})}\)},
    ymin=0.90, ymax=1,
    ymajorgrids=true,
    no markers,
    height=7cm,
    style={thick},
    width=\linewidth,
    legend pos=south east,
]
\addplot [
    color=blue,
] table {l_avg_log0.01_sum.dat};
\addplot [
    color=brown,
] table {l_avg_log0.01_sqrt.dat};
\addplot [
    color=red,
] table {l_avg_log0.1_sum.dat};
\legend{log 0.01 - const, log 0.01 - sqrt, log 0.1 - const}
\end{axis}
\end{tikzpicture}

\section{Analysis of the modified binary search}
\label{appendix:C}
We considered the common binary search (\(\lfloor \log{(1-\delta_{t})^{-1}} + 1 \rfloor \) queries) and our improved binary search.
To prove the effectiveness of our technique, we run our method with different hyperparameters. We can see, from the figure below, that it permits to reduce the number of queries by a large factor.

\begin{tikzpicture}
\begin{axis}[
    enlargelimits=false,
    ylabel={\# of queries},
    ymin=1.5, ymax=11,
    ymajorgrids=true,
    no markers,
    style={thick},
    height=9cm,
    width=\linewidth,
    legend pos=north east,
]
\addplot [
    color=red,
] table {binreal_log0.01_sum.dat};
\addplot [
    color=red,
    dashed
] table {binexp_log0.01_sum.dat};
\addplot [
    color=brown,
] table {binreal_log0.01_sqrt.dat};
\addplot [
    color=brown,
    dashed
] table {binexp_log0.01_sqrt.dat};
\addplot [
    color=blue,
] table {binreal_log0.1_sum.dat};
\addplot [
    color=blue,
    dashed
] table {binexp_log0.1_sum.dat};

\legend{our method: log 0.01 - const, standard: log 0.01 - const, our method: log 0.01 - sqrt, standard: log 0.01 - sqrt, our method: log 0.1 - const, standard: log 0.1 - const}
\end{axis}
\end{tikzpicture}

\section{2D example of the \textsc{DeltaBound} attack}
\label{appendix:D}
To better understand the behavior of our attack, we plot some iterations on different functions and situations. The target point is always set at (0, 0) and we define all functions as a binary classifier based on the result of the respective conditions. We use the following functions to explore the behavior of our method across easy and complex functions.
\begin{itemize}
    \item \(f_1(x, y)  = x+y+0.1\)
    \item \(f_2(x, y)  = sin(100x + 100y + 1)\)
    \item \(f_3(x, y) = {sin(20x+0.5)}^2 + {cos(10y+0.1)}^2 - 0.7 \)
    \item \(f_4(x, y)  = \sum_{a=0}^{10} sin(15(a+5)x + 1) + sin(15(a+2)y + 4)\)
\end{itemize}

\begin{figure}[hh]
    \centering
    \begin{subfigure}{0.4\textwidth}
    \includegraphics[width=\textwidth]{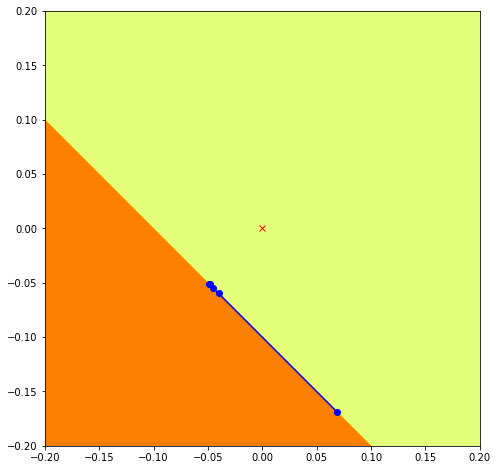}
    \caption{\(f_1(x, y) \leq 0\)}
    \end{subfigure}
    \begin{subfigure}{0.4\textwidth}
    \includegraphics[width=\textwidth]{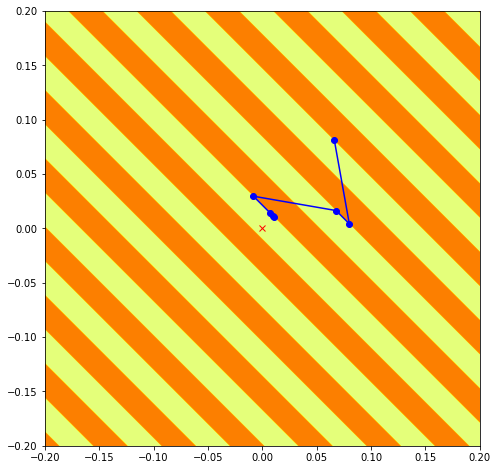}
    \caption{\(f_2(x, y) \leq 0\)}
    \end{subfigure}
\end{figure}
\begin{figure}[hh]
    \centering
    \begin{subfigure}{0.4\textwidth}
    \includegraphics[width=\textwidth]{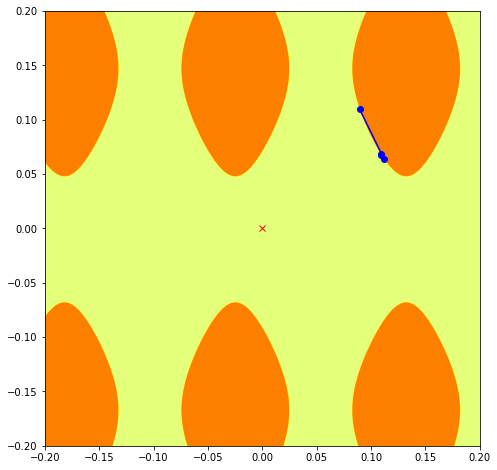}
    \caption{\(f_3(x, y) \leq 0\)}
    \end{subfigure}
    \begin{subfigure}{0.4\textwidth}
    \includegraphics[width=\textwidth]{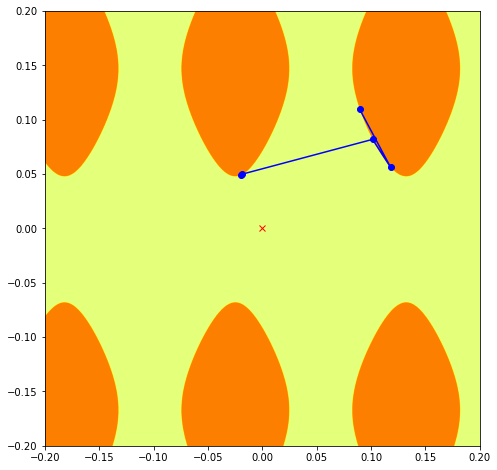}
    \caption{\(f_3(x, y) \leq 0\)}
   \end{subfigure}
\end{figure}
\begin{figure}[hh]
    \centering
    \begin{subfigure}{0.4\textwidth}
    \includegraphics[width=\textwidth]{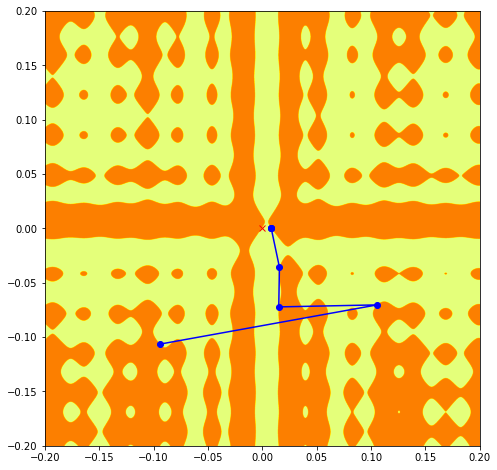}
    \caption{\( f_4(x, y) \leq 0\)}
    \end{subfigure}
    \begin{subfigure}{0.4\textwidth}
    \includegraphics[width=\textwidth]{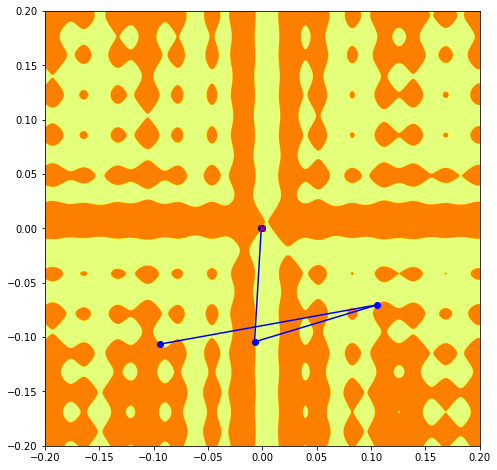}
    \caption{\(f_4(x, y) \leq 0\)}
    \end{subfigure}
\end{figure}

\end{document}